\def\eqref#1{equation~\ref{#1}}
\def\1{\bm{1}}
\DeclareMathAlphabet{\mathsfit}{\encodingdefault}{\sfdefault}{m}{sl}
\SetMathAlphabet{\mathsfit}{bold}{\encodingdefault}{\sfdefault}{bx}{n}
\long\def\addtocontents#1#2{%
  \protected@write\@auxout
    {\let\label\@gobble \let\index\@gobble \let\glossary\@gobble}%
    {\string\@writefile{#1}{#2}}}
\newcommand{\ind}{\perp\!\!\!\!\perp} 
\theoremstyle{plain}
\newtheorem{theorem}{Theorem}[section]
\newtheorem{proposition}[theorem]{Proposition}
\theoremstyle{definition}
\newtheorem{assumption}[theorem]{Assumption}
\theoremstyle{remark}
\newtheorem{remark}[theorem]{Remark}
\def\cL{\mathcal{L}}
\title{Learning Causal Alignment for Reliable \\Disease Diagnosis}
\author{Mingzhou Liu$^1$  Ching-Wen Lee$^1$  Xinwei Sun$^{*2}$  Xueqing Yu$^1$ \textbf{Qiao Yu}$^{*3}$ \textbf{Yizhou Wang}$^{4,1,5,6,7}$ \\
$^1$ School of Computer Science, Peking University \\
$^2$ School of Data Science, Fudan University \\
$^3$ Computer Science Department, Shanghai Jiao Tong University\\
$^4$ Center on Frontiers of Computing Studies, Peking University \\
$^5$ Institute for Artificial Intelligence, Peking University \\
$^6$ Nat'l Eng. Research Center of Visual Technology, Peking University\\
$^7$ State Key Lab. of General Artificial Intelligence, Peking University\\
$^*$ Corresponding authors\\
{\fontfamily{qcr}\selectfont 
\{liumingzhou,\!\! jingwenli,\!\! yuxueqing\}@stu.pku.edu.cn
}\\
{\fontfamily{qcr}\selectfont 
sunxinwei@fudan.edu.cn,\!\! qiaoyu@sjtu.edu.cn,\!\! yizhou.wang@pku.edu.cn
}
}
\begin{document}









\maketitle

\addtocontents{toc}{\protect\setcounter{tocdepth}{0}}

\begin{abstract}   
Aligning the decision-making process of machine learning algorithms with that of experienced radiologists is crucial for reliable diagnosis. While existing methods have attempted to align their diagnosis behaviors to those of radiologists reflected in the training data, this alignment is primarily associational rather than causal, resulting in pseudo-correlations that may not transfer well. In this paper, we propose a causality-based alignment framework towards aligning the model's decision process with that of experts. Specifically, we first employ counterfactual generation to identify the causal chain of model decisions. To align this causal chain with that of experts, we propose a causal alignment loss that enforces the model to focus on causal factors underlying each decision step in the whole causal chain. To optimize this loss that involves the counterfactual generator as an implicit function of the model's parameters, we employ the implicit function theorem equipped with the conjugate gradient method for efficient estimation. We demonstrate the effectiveness of our method on two medical diagnosis applications, showcasing faithful alignment to radiologists. Code is publicly available at \url{https://github.com/lmz123321/Causal_alignment}.
\end{abstract}

\section{Introduction}

Alignment is essential for developing reliable medical diagnosis systems \cite{zhuang2020consequences}. For instance, in lung cancer diagnosis, using models that are misaligned with clinical protocols can result in reliance on contextual features or instrument markers (Fig.~\ref{fig.intro} (c)) for diagnosis, leading to misdiagnosis and loss of timely treatment.

Despite the importance, alignment in medical imaging systems is largely understudied. Existing studies that are mostly related to us primarily focused on visual alignment, including \cite{zhang2018interpretable,chen2019looks,brady2023provably} that proposed to learn object-centric representations, and \cite{hind2019ted,rieger2020interpretations} that adopted multi-task learning schemes to predict labels and expert decision bases simultaneously. Particularly, recent works \cite{ross2017right,gao2022aligning,zhang2023magi} have proposed to regularize the model's input gradient to be within expert-annotated areas. However, their alignment with expert behaviors is only associational, rather than causal, making their models still biased towards spurious correlated features. This limitation is further explained in Fig.~\ref{fig.intro} (a), where two decision chains with different causal structures can exhibit similar correlation patterns.

In this paper, we propose a causal alignment approach that focuses on the alignment in the underlying causal mechanism of the decision-making process. Specifically, we first identify causal factors behind each decision step using counterfactual generation. We then propose a causal alignment loss to enforce these identified causal factors to be aligned within those annotated by the radiologists. To optimize this loss that involves the counterfactual generator as an implicit function of the model parameters, we employ the implicit function theorem equipped with the conjugate gradient algorithm for efficient estimation. To illustrate, we consider the lung cancer diagnosis as shown in Fig.~\ref{fig.intro} (b). Guided by the alignment loss, our model can mimic the clinical decision pipeline, which first identifies the imaging area that describes attributes of the lesion, and then diagnoses based on these attributes \cite{xie2020chexplain}. Such training is facilitated by employing causal attribution \cite{zhao2023conditional} for inferring attributes that are causally related to the diagnosis. Returning to the lung cancer diagnosis example, Fig.~\ref{fig.intro} (c) shows that our method can learn causally aligned representations, in contrast to the features adopted by baseline methods, which are challenging to interpret.

\begin{figure}[t]
    \centering
    \includegraphics[width=.9\textwidth]{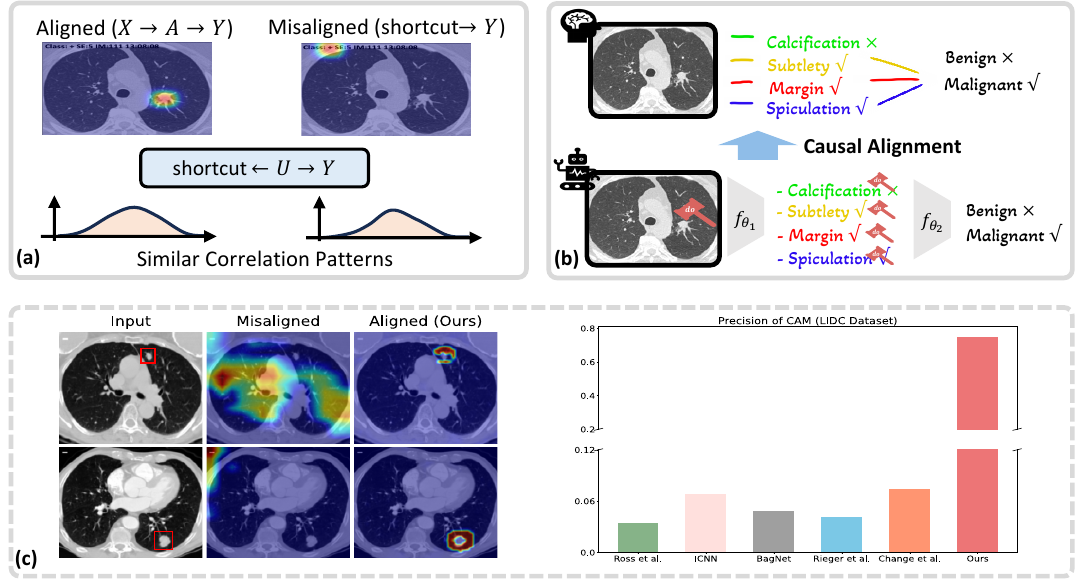}
    \caption{(a) Two decision chains with different causal structures but present similar correlation patterns. The left chain ``$\text{mass } (X) \!\to\! \text{attributes } (A) \!\to\! \text{label } (Y)$'' aligns with radiologists, while the right chain ``$\text{shortcut} \!\to\! \text{label } (Y)$'' is misaligned. However, both $(X,Y)$ and $(\text{shortcut}, Y)$ are correlated, due to the confounding bias between the shortcut and $Y$. (b) Our approach for learning causally aligned models. We first identify features and attributes that causally influence the model's decision, then align them to that of radiologists in a hierarchical fashion. (c) Class Activation Mapping (CAM) visualization and comparison of CAM precision on lung cancer diagnosis. \vspace{-0.3cm}}
    \label{fig.intro}
\end{figure}

\textbf{Contributions.} To summarize, our contributions are:
\begin{enumerate}
    \item (\textbf{Causal alignment}) We propose a novel causal alignment approach to achieve alignment of causal mechanisms underlying the decision process of experienced radiologists. 
    \item (\textbf{Optimization}) We propose an efficient optimization algorithm by employing the implicit function theorem along with the conjugate gradient method. 
    \item (\textbf{Experiment}) We demonstrate the utility of our approach through significant improvements in alignment and diagnosis, on lung cancer and breast cancer diagnosis tasks. 
\end{enumerate}

\section{Related works}

\textbf{Learning visual alignment.} Alignment is more broadly studied, \emph{e.g.}, in natural language processing \cite{ouyang2022training} and reinforcement learning \cite{ibarz2018reward}. In the realm of visual alignment, \cite{hind2019ted,rieger2020interpretations} proposed to align deep learning models with humans by simultaneously predicting the class label and the decision area. \cite{zhang2018interpretable,liu2021act,muller2023anatomy} aligned the decision-making process of neural networks by incorporating expert knowledge into architecture design. Of particular relevance to our work are \cite{ross2017right,gao2022aligning,zhang2023magi}, which suggested constraining the input image gradient to be significant in areas annotated by experts. However, the input gradient can be biased by pseudo-correlations that exist between expert features and shortcut features \cite{geirhos2020shortcut}, leading to a misaligned model. In contrast, we adopt counterfactual generation to identify causal areas that determine the model's prediction. By ensuring these factors are confined to expert-annotated areas, our model can be effectively aligned with the expert's decision process.

\textbf{Explaining medical AI.} Explainability is essential for physicians to trust and utilize medical diagnosis models \cite{lipton2017doctor}. To achieve this, \emph{attribution-based methods} explained model predictions by assessing the importance of different features \cite{suryani2022lung,yuen2024tutorial}. \emph{Example-based methods} utilized similar images \cite{barnett2021interpretable} or prototypes \cite{gallee2024evaluating} to interpret the underlying decision rules. However, these approaches focused on interpreting models that have been trained. If misalignment occurs during the training process, their utility could be limited. In contrast, we propose an alignment loss to learn an intrinsically explainable model.

\section{Problem setup \& Background}

In this section, we formulate our problem and introduce the background knowledge. 

\textbf{Problem setup.} We consider the classification scenario, where the system contains an image $x\in\mathcal{X}$ and a label $y\in\mathcal{Y}$ from an expert annotator. In addition to $y$, we assume the expert also provides an explanation $e$ to explain his decision of labeling $x$ as $y$. Commonly, the explanation could refer to region of interest annotations or attribute descriptions. For example, radiologists often write an \emph{annotation} section and an \emph{observation} section, which respectively describe which body part is abnormal and what phenomena are observed, in their reports to explain their diagnosis \cite{xie2020chexplain}. Motivated by this, we assume for each sample, the explanation can be formulated as a binary mask $m$ indicating the abnormal area, along with a binary attribute description $a=[a_1,...,a_p] \in \mathcal{A}$ of the abnormality. In this regard, our data can be denoted as $\mathcal{D}=\{(x_i,y_i,e_i=(m_i,a_i))\}_{i=1}^n$. With this data, \textbf{our goal} is then to learn a classifier $f_\theta: \mathcal{X}\mapsto \mathcal{Y}$ that \textbf{i)} predicts $y$ accurately \textbf{ii)} has a decision mechanism that is aligned with the radiologists. 

\textbf{Structural counterfactuals.} To measure the likelihood that one event caused another, \cite{pearl2009causal} defines the following counterfactual quantity known as the \emph{probability of causation}:
\begin{equation}
\label{eq.coe}
    \mathrm{P}(Y_{x} = y|X=x_0,Y=y_0), 
\end{equation}
which reads as ``the probability of $Y$ would be $y$ had $X$ been $x$ if we factually observed that $X=x_0$ and $Y=y_0$"\footnote{Under the \emph{exogeneity} and \emph{monotonicity} conditions for binary $X,Y$, this quantity is identifiable.}. Here, $Y_x$ denotes the unit-counterfactual \cite{pearl2009causal} or potential outcome. In our scenario, rather than considering the whole image $x$, we are interested in specific regions within the image that causally determine the model's decision. To identify these regions, we adopt the following counterfactual generation scheme.

\textbf{Counterfactual (CF) Generation.} Given the classifier $f_\theta$ and any sample pair $(x_0,y_0)$, CF generates the counterfactual image $x^*$ with respect to the counterfactual class $y^*\neq y$ via \cite{dhurandhar2018explanations,verma2020counterfactual,guyomard2023generating,augustin2024dig}: 
\begin{equation}
    x^* = \arg\min_{x} \mathcal{L}_{\text{ce}}(f_\theta(x),y^*) + \alpha {d}(x,x_0),
    \label{eq.objective.cf}
\end{equation}
where $\mathcal{L}_{\text{ce}}$ is the cross-entropy loss for classification, ${d}(\cdot,\cdot)$ is a distance metric that constrains the modification to be sparse, and $\alpha$ is the regularization hyperparameter. In this regard, the modified area $supp(x^* - x_0)$ is responsible for the classification of $x_0$ as $y_0$, in that if we modified $x_0$ to $x^*$, the model would have made a different decision $y^*$. Indeed, in Prop.~\ref{prop.coe}, we can show that $x^*$ maximizes the probability of causation $\mathrm{P}_\theta(Y_{x} = y^*|X=x_0,Y=y_0)$\footnote{This term is identifiable since $f_\theta$ is known (see Prop.~\ref{prop.identify} for details).} induced by the classifier $f_\theta$, subject to $d(x,x_0)\leq d_\alpha$ for some $d_\alpha$.

To ensure the realism of the generated image, we can implement (\ref{eq.objective.cf}) using gradient descent in the image's latent space. Notably, this approach has proven effective for generating realistic images \cite{goyal2019counterfactual,balasubramanian2020latent,zemni2023octet}, as also verified by the visualization of generated counterfactual images in Fig.~\ref{fig.visu-cf-img}.

Indeed, (\ref{eq.objective.cf}) is similar to but different from the optimization in \textbf{Adversarial Attack (AA)} \cite{szegedy2013intriguing} concerning \emph{perceptibility} \cite{verma2020counterfactual}. Although both methods share the same objective framework, CF aims at highlighting significant areas that explain the classifier's decision process, whereas AA favors making small and imperceptible changes to alter the prediction outcome \cite{wachter2017counterfactual}. This often leads to different choices of the distance function $d(\cdot,\cdot)$ and the hyperparameter $\alpha$ \cite{freiesleben2022intriguing,guidotti2024counterfactual}.
\section{Methodology}

\begin{figure}[tp]
    \centering
    \includegraphics[width=.9\textwidth]{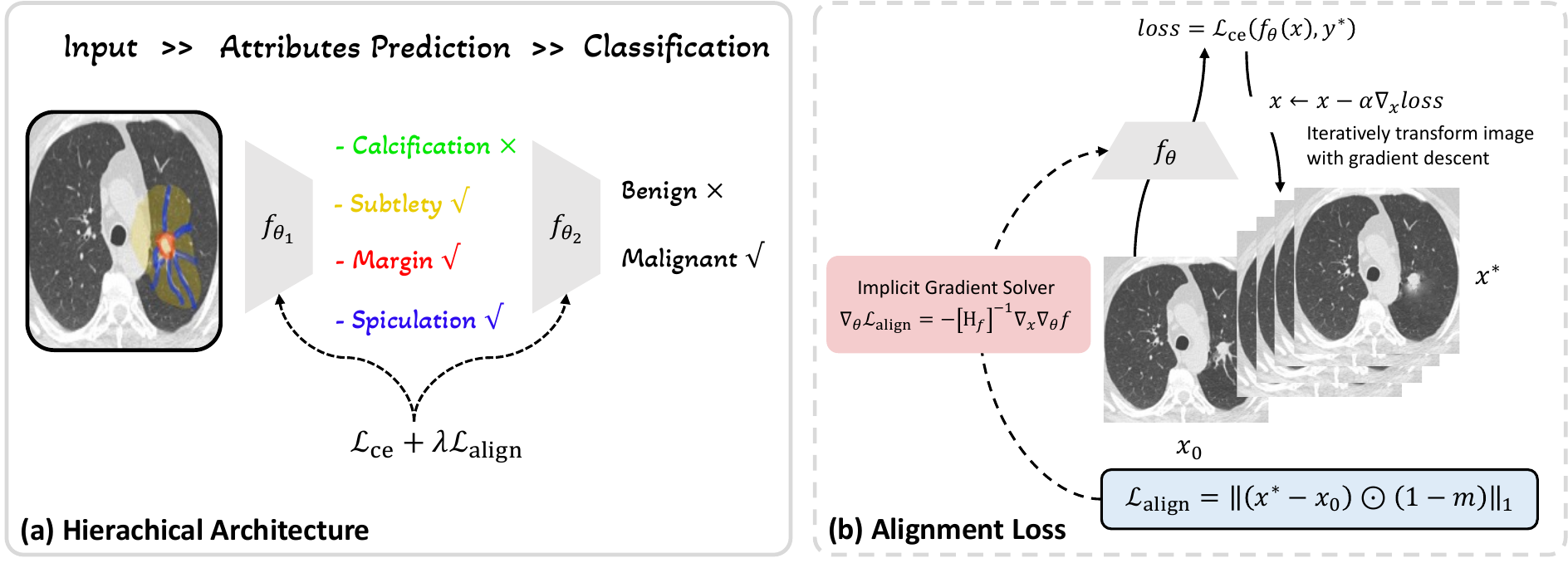}
    \caption{The schematic overview of our method. (a) We adopt a hierarchical structure that first provides attribute descriptions for the image, and then shows the diagnosis result. (b) Training with the proposed alignment loss. In the forward pass, a counterfactual image $x^*$ is generated and used to compute the alignment loss $\mathcal{L}_{\text{align}}$ relative to the expert's annotation $m$. In the backward pass, we use an implicit gradient solver to obtain the gradient $\nabla_\theta \mathcal{L}_{\text{align}}$ and use it to update the parameter $\theta$.}
    \label{fig.overview}
\end{figure}

In this section, we introduce our framework for medical decision alignment. This section is composed of three parts. First, in Sec.~\ref{sect.loss}, we introduce a \emph{causal alignment loss} based on counterfactual generation, to align the model's decision bases with those of experts. Then, in Sec.~\ref{sect.optim}, we propose to use the implicit function theorem equipped with conjugate gradient estimation to compute the gradient of our loss for optimization. Finally, in Sec.~\ref{sect.hierachy}, we enhance our method with hierarchical alignment for cases where attribute annotations are available, using a hierarchical pipeline based on causal attribution. We summarize our framework in Fig.~\ref{fig.overview}.

\subsection{Casual alignment loss}
\label{sect.loss}

In this section, we propose a causal alignment loss to align the model with the experts. For illustration, we first consider the case where the attribute annotations are unavailable. The idea of our loss is to penalize the model once its counterfactual image contains modifications beyond radiologist-annotated areas. Specifically, we optimize a loss $\mathcal{L}_{\text{align}}$ of the following form:
\begin{equation}
    \mathcal{L}_{\text{align}} := \frac{1}{n} \sum_{i=1}^n \left\|(x^*_i- x_i) \odot (1-m_i)\right\|_{\ell_1},
    \label{eq.our loss}
\end{equation}
where $\odot$ denotes the element-wise matrix product, $x^*_i$ is the counterfactual image of $x_i$ obtained by (\ref{eq.objective.cf}), and $m\in \{0,1\}^{\mathrm{dim}(x)}$ is the binary mask provided by radiologists. Then, by combining $\mathcal{L}_{\text{align}}$ with the cross-entropy loss for classification, we have our overall training objective:
\begin{equation*}
    \mathcal{L} = \mathcal{L}_{\text{ce}} + \lambda \mathcal{L}_{\text{align}}, 
\end{equation*}
where $\lambda$ is a tuning hyperparameter. To understand how the objective works towards alignment, note that $x^*$ maximizes the counterfactual likelihood $\mathrm{P}_\theta(Y_x=y^*|x_0,y_0)$, indicating that $supp(x^*-x_0)$ represents the causal factors that influence the decision of the model $f_\theta$. Therefore, minimizing the distance between $supp(x^*-x_0)$ and $m$ encourages the model's causal factors to align more closely to those of the experts.

Our loss enjoys several advantages over alternative methods in visual alignment. Compared to \cite{liu2021net,liu2021act,muller2023anatomy} that incorporated prior knowledge into network architectures, our loss is more flexible and can be easily adapted to other scenarios and backbones. In contrast to \cite{ross2017right,zhang2023magi} that constrained the input gradient, our approach can effectively avoid pseudo-features, benefiting from the identification of causal factors. 

\subsection{Optimization}
\label{sect.optim}

In this section, we introduce the optimization process for the proposed alignment loss. For optimization, we need to compute the gradient $\nabla_\theta \mathcal{L}_{\text{align}}$, which involves the Jacobian matrix $\nabla_\theta x^*$. The main challenge here is that $x^*$ is an \emph{implicit function} of $\theta$, defined by the argmin operator in (\ref{eq.objective.cf}), which makes it hard to compute $\nabla_\theta x^*$ explicitly. 

To address this challenge, we resort to the Implicit Function Theorem (IFT), which allows us to compute the gradient in an implicit manner. Specifically, note that if $x^*$ is the minimum point of the function $T(x,\theta) := \mathcal{L}_{\text{ce}}(f_\theta(x),y^*) + \alpha d(x,x_0)$, it should satisfy that:
\begin{equation*}
    \nabla_x T \Bigr|_{x^*} = 0.
\end{equation*}

According to the law of total derivation, this implies that:
\begin{equation*}
    \nabla_\theta \nabla_x T \Bigr|_{x^*} = \{\nabla_x (\nabla_x T) \cdot \, \nabla_\theta x^* + \nabla_\theta (\nabla_x T)\} \Bigr|_{x^*} = 0.
\end{equation*}

Therefore, computing $\nabla_\theta x^*$ boils down to the problem of solving the following linear equation:
\begin{equation}
    H z^* = b,
    \label{eq.linear-eq}
\end{equation}
where we denote $H:=\nabla_x (\nabla_x T)$ as the Hessian matrix, $z^* :=\nabla_\theta x^*$ as the Jacobian matrix, and $b:=-\nabla_\theta (\nabla_x T)$ as the negative mixed derivative for brevity. 

Formally speaking, we have the following theorem:
\begin{theorem}[Implicit Function Theorem (IFT), \cite{krantz2002implicit}]
    Consider two vectors $x, \theta$, and a differentiable function $T(x,\theta)$. Let $x^* := \arg\min_x T(x,\theta)$. Suppose that: \textbf{i)} the argmin is unique for each $\theta$, and \textbf{ii)} the Hessian matrix $H$ is invertible. Then $x^*(\theta)$ is a continuous function of $\theta$. Further, the Jacobian matrix $\nabla_\theta x^*$ satisfies the linear equation (\ref{eq.linear-eq}).
    \label{thm.ift}
\end{theorem}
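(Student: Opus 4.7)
The plan is to derive the statement directly from the classical Implicit Function Theorem as cited from \cite{krantz2002implicit}, with the argmin condition supplying the defining equation for the implicit function. First, I would invoke the first-order optimality condition: because $x^*(\theta)$ is, by assumption (i), the unique unconstrained minimizer of $T(\,\cdot\,,\theta)$, it must satisfy
\begin{equation*}
\nabla_x T(x^*(\theta), \theta) \;=\; 0.
\end{equation*}
Define the auxiliary vector field $F(x,\theta) := \nabla_x T(x,\theta)$, so that $F$ vanishes along the graph of $x^*$.

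Next, I would apply the classical IFT to $F$ at the point $(x^*, \theta)$. The Jacobian of $F$ with respect to $x$ is exactly the Hessian $H = \nabla_x (\nabla_x T)$, which is invertible by assumption (ii). The classical IFT then produces a continuous (in fact continuously differentiable) local solution branch $\theta \mapsto \tilde{x}(\theta)$ with $F(\tilde{x}(\theta),\theta) \equiv 0$. Uniqueness of the global argmin from assumption (i) guarantees that $\tilde{x}(\theta)$ coincides with $x^*(\theta)$, so no spurious critical points contaminate the branch. This establishes the continuity claim.

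Finally, to obtain the linear equation, I would differentiate the identity $F(x^*(\theta),\theta) \equiv 0$ with respect to $\theta$ using the chain rule. This yields
\begin{equation*}
\nabla_x F \bigr|_{x^*} \cdot \nabla_\theta x^*(\theta) \;+\; \nabla_\theta F \bigr|_{x^*} \;=\; 0.
\end{equation*}
Substituting $\nabla_x F = H$, $\nabla_\theta x^* = z^*$, and $\nabla_\theta F = \nabla_\theta(\nabla_x T) = -b$, I recover $H z^* = b$, which is precisely equation (\ref{eq.linear-eq}).

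The main obstacle, such as it is, is a mild regularity gap: the classical IFT requires $F$ to be continuously differentiable, which amounts to asking that $T$ be $C^2$ rather than merely differentiable as stated. I would resolve this either by tacitly strengthening the smoothness hypothesis on $T$ (satisfied in our setting since $T$ is built from the smooth network $f_\theta$, the cross-entropy, and a smooth distance $d$) or by citing a differentiability-only variant of the IFT. Aside from this regularity bookkeeping, the proof is a direct reduction to the cited theorem and requires no novel calculation.
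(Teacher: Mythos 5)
Your proposal is correct and follows essentially the same route as the paper: the theorem is a citation of the classical IFT, and the paper's own derivation likewise applies the first-order optimality condition $\nabla_x T|_{x^*}=0$ and differentiates it in $\theta$ to obtain $Hz^*=b$ with $b:=-\nabla_\theta(\nabla_x T)$. Your added remarks --- that the local IFT branch must be identified with the global argmin via uniqueness, and that the classical theorem really needs $T$ to be $C^2$ rather than merely differentiable --- are careful points the paper glosses over, but they do not change the argument.
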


Thm.~\ref{thm.ift} suggests that we can compute the Jacobian matrix using $\nabla_\theta x^* = - H^{-1} b$, which then gives $\nabla_\theta \mathcal{L}_{\text{align}}$ with the chain-rule. Nonetheless, for imaging tasks, typically $\theta$ is the parameter of high-dimensional neural networks, making it intractable to compute the Hessian matrix and its inverse. To address this issue, we employ the conjugate gradient algorithm \cite{vishnoi2013lx} to estimate the solution of (\ref{eq.linear-eq}), without explicitly computing or storing the Hessian matrix. Notably, the conjugate gradient method has been successfully deployed in Hessian-free methods for deep learning \cite{martens2010deep} and meta learning \cite{sitzmann2020metasdf}.

We briefly introduce the idea of conjugate gradient below, with a detailed discussion left to \cite{vishnoi2013lx} (Chap.~6). To begin with, note that solving (\ref{eq.linear-eq}) is equivalent to solving:
\begin{equation*}
    z^* = \arg\min_{z} g(z), \,\, \text{where} \,\, g(z):=\frac{1}{2} {z}^\top H z - b^\top z,
\end{equation*}
in that the minimum point $z^*$ satisfies $\nabla_{z} g \Bigr|_{z^*} = Hz^* - b = 0$. 

In this regard, we can implement gradient descent to minimize $g(\cdot)$, where the minimum point gives the solution of (\ref{eq.linear-eq}). During the minimization, the direction of the gradient updating is set to be conjugate (\emph{i.e.}, orthogonal) to the residual $b-H z^{(i)}$, where $z^{(i)}$ is the estimate of $z^*$ in the $i$-th iteration, in order to achieve optimal convergence rate. To achieve this without explicitly forming $H$, we can leverage the \emph{Hessian vector product} \cite{song2022modeling}. Specifically, for $\epsilon$ that is a small perturbation around $z$, we have:
\begin{equation*}
    \nabla g(z+ \epsilon z^{(i)}) \approx \nabla g(z) + H \epsilon z^{(i)}.
\end{equation*}
It then follows that:
\begin{equation*}
    H z^{(i)} \approx \frac{\nabla g(z+ \epsilon z^{(i)}) - \nabla g(z)}{\epsilon},
\end{equation*}
which means we can estimate $H z^{(i)}$ with the finite difference of $\nabla g$ on the right-hand side.

Equipped with Thm.~\ref{thm.ift} especially the conjugate gradient method for estimation, we now summarize the optimization process for our loss in Alg.~\ref{alg:optimization}.

\algnewcommand\algorithmichyper{\textbf{Hyperparameters:}}
\algnewcommand\Hyperparameter{\item[\algorithmichyper]}

\algnewcommand\algorithmicinput{\textbf{Input:}}
\algnewcommand\Input{\item[\algorithmicinput]}

\algnewcommand\algorithmicoutput{\textbf{Output:}}
\algnewcommand\Output{\item[\algorithmicoutput]}

\begin{algorithm}[htp]
\caption{{Causal alignment training}}
\label{alg:optimization}
\begin{algorithmic}[1]
\Input Data $\mathcal{D}$, 
\Output {Decision model $f_\theta$},
\Hyperparameter{Sparsity regularization $\alpha$, weight of alignment loss $\lambda$, learning rate $\eta$.}
\While{not converged} \\
{**\emph{Forward pass}}
\State Compute $\mathcal{L}_{\text{ce}}$.
\State Optimize (\ref{eq.objective.cf}) to obtain $x^*$ and compute $\mathcal{L}_{\text{align}}$ using (\ref{eq.our loss}).
\State Compute $\mathcal{L}\leftarrow\mathcal{L}_{\text{ce}}+\lambda\mathcal{L}_{\text{align}}$. \\
{**\emph{Back propagation}}
\State Estimate $\nabla_\theta \mathcal{L}_{\text{align}}$ with conjugate gradient.
\State Update $\theta$: $\theta \leftarrow \theta-\eta \nabla_\theta \mathcal{L}$.  \,\, \text{// \emph{or Adam}}
\EndWhile
\end{algorithmic}
\end{algorithm}

\subsection{Hierarchical alignment}
\label{sect.hierachy}

In this section, we extend our method to the scenario where attribute annotations are available. We introduce a hierarchical alignment framework to mimic the clinical diagnostic procedure. 

{\textbf{Causal diagram and assumptions.} We characterize this diagnostic process with the causal graph in Fig.~\ref{fig.causal-graph}. According to \cite{mcnitt2007lung,lee2017curated}, the first step in the diagnosis is annotating each mass attribute from the image \cite{xie2020chexplain}. Therefore, we assume causal edges from the image $X$ to the attributes $A$. Since these attributes are directly annotated from the image, we assume no additional dependencies among them, implying their conditional independence given $X$. Building on these attributes, we further assume a causal relationship $A \to Y $, representing the decision-making process from the attributes to the final decision label. }


\begin{wrapfigure}{r}{1.9in}
\centering
  \includegraphics[width=1.8in]{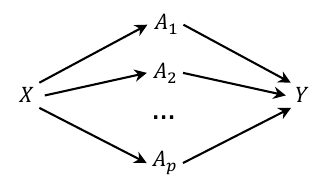}
  \caption{Causal diagram of radiologists' decision process. {$A$ and $Y$ denote the expert's annotations of the attributes and the decision label, respectively.}}
  \label{fig.causal-graph}
\end{wrapfigure}

Specifically, our classifier $f_\theta$ consists of an $f_{\theta_1}:\mathcal{X}\mapsto\mathcal{A}$ that predicts the attributes from the image $x$, and an $f_{\theta_2}:\mathcal{A} \mapsto \mathcal{Y}$ that classifies the label based on the predicted attributes. \emph{For counterfactual generation}, we first find attributes responsible for predicting $y$ by altering the predicted attributes $\hat{a}:=f_{\theta_1}(x)$ to the counterfactual ones $a^*$. Then, we locate image features that account for the modification of $|a^*-\hat{a}|$ via another counterfactual optimization over $x$ and obtain the counterfactual image $x^*$. For hierarchical alignment, we require both $|a^*-\hat{a}|$ and $|x^*-x|$ to be aligned with the expert's annotations of causal attributes and image regions, respectively. 

\textbf{Causal attribution for annotations.} Although the attribute annotations can be available for many cases \cite{armato2011lung,lee2017curated}, it is hard to know which ones of these attributes causally determined the labeling of radiologists for each specific patient. To identify the causal attributes for alignment, we employ causal attribution based on counterfactual causal effect \cite{zhao2023conditional}, which extends (\ref{eq.coe}) to enable the quantification of the probability of causation for any subsets of attributes while conditioning on the entire attribute vector. Specifically, given evidence of the attributes $A=a$ and the label $Y=y$, we calculate the Conditional Counterfactual Causal Effect (CCCE) score for each attribute subset $S\subseteq \{1,...,\mathrm{dim}(A)\}$:
\begin{equation*}
    \text{CCCE}(S):=\mathbb{E}(Y_{A_S={1}} - Y_{A_S={0}} | A=a, Y = y),
\end{equation*}
which is the difference between the conditional expectations of the potential outcomes $Y_{A_S=1}$ and $Y_{A_S=0}$ given the evidence. Recall that each attribute $A_i$ is binary. Then, according to \cite{zhao2023conditional} (Thm.~2), $\text{CCCE}(S)$ is identifiable and equals to
\begin{equation*}
    \text{CCCE}(S) \overset{(1)}{=} 1 - \frac{\mathrm{P}\left(Y_{A_S={1}}=y \mid A=a\right)}{\mathrm{P}(Y=y \mid A=a)} \overset{(2)}{=} 1 - \frac{\mathrm{P}\left(Y=y \mid A_S = 1, A_{-S} = a_{-S}\right)}{\mathrm{P}(Y=y \mid A=a)},
\end{equation*}
where $A_{-S}$ denotes attributes beyond the subset $S$. Here, ``(1)" arises from the exogeneity condition that there is no confounding between $A$ and $Y$ (\emph{i.e.}, $Y_a \ind A$), and ``(2)" is based on the monotonicity condition \footnote{\cite{zhao2023conditional} also assumed exogeneity condition and the monotonicity conditions among $A$, if there exist causal relations among $A$. Since there are no causal relations among $A$, we do not need these conditions.} that $Y_{a} \leq Y_{a^\prime}$ if $a\preceq a^\prime$ \footnote{Here, $a$ and $a^\prime$ are both vectors. $a \preceq a^\prime$ denotes $a_k \leq a^\prime_k$ for each $k$.}. Both conditions are natural to hold in our scenario. Specifically, the exogeneity condition holds since the radiologist's decision $Y$ is based only on attributes (Fig.~\ref{fig.causal-graph}). For the monotonicity condition, it is easy to see that each intervention on any attribute from $0$ to $1$ (\emph{e.g.}, from no speculation to speculation) will raise the probability of malignancy.

After computing the CCCE score for each attribute subset, we select the subset $S$ with the highest CCCE as the set of attributes causally related to the label. Accordingly, we set the annotation vector $r \in \{0,1\}^{\mathrm{dim}(A)}$ such that $r_S=(1,...,1)^\top$. 

\textbf{Hierarchical alignment.} With such annotations, we introduce our hierarchical alignment process. Specifically, our objective function over $\theta=(\theta_1,\theta_2)$ is:
\begin{equation}
    \label{eq.objective}
    \cL(\theta) := \cL_{\text{ce}}(f_{\theta_2}(f_{\theta_1}(x)),y) + \cL_{\text{ce}}(f_{\theta_1}(x), a) + \lambda_2 \mathcal{L}_{\text{align}}(\theta_2) + \lambda_1 \mathcal{L}_{\text{align}}(\theta_1),
\end{equation}
where $\lambda_1 > 0, \lambda_2 > 0$ are tuning hyperparameters. Here, $\cL_{\text{ce}}(f_{\theta_2}(f_{\theta_1}(x)),y)$ and $\cL_{\text{ce}}(f_{\theta_1}(x), a)$ denote the cross-entropy losses for predicting $y$ and $a$, respectively. 

The alignment loss $\mathcal{L}_{\text{align}}(\theta_2)$ over $\theta_2$ is defined as:
\begin{equation*}
    \mathcal{L}_{\text{align}}(\theta_2) := \frac{1}{n} \sum_{i=1}^n \left\|(a^*_i(\theta_2) - \hat{a}_i) \odot (1-r_i)\right\|_{\ell_1},
    \label{eq.our loss 1}
\end{equation*}
where $\hat{a}_i:=f_{\theta_1}(x_i)$ and the counterfactual attributes $a^*(\theta_2)$ is generated via:
\begin{equation}
    a^*(\theta_2) = \arg\min_{a^\prime} \mathcal{L}_{\text{ce}}(f_{\theta_2}(a^\prime),y^*) + \alpha_2 {d}(a^\prime,\hat{a}).
\label{eq.counter-attribute}
\end{equation}

Similarly, the alignment loss $\mathcal{L}_{\text{align}}(\theta_1)$ over $\theta_1$ is defined by (\ref{eq.our loss}), where the counterfactual image $x^*(\theta_1)$ that explains the change of $\hat{a}$ to $a^*$ is generated by:
\begin{equation}
    x^*(\theta_1) = \arg\min_{x^\prime} \mathcal{L}_{\text{ce}}(f_{\theta_1}(x^\prime),a^*) + \alpha_1 {d}(x^\prime,x).  
\label{eq.counter-image-2}
\end{equation}
With the objective (\ref{eq.objective}), we optimize $\theta$ by applying Alg.~\ref{alg:optimization} to alignment terms. After the optimization, our decision process $x\to f_{\theta_2}(f_{\theta_1}(x))$ aligns well with that of the experts, with $f_{\theta_1}$ employing causal imaging factors to predict attributes, and $f_{\theta_2}$ using the causal attributes to predict $y$. 

\section{Experiment}

In this section, we evaluate our method on two medical diagnosis tasks: the benign/malignant classification of lung nodules and breast masses. 

\subsection{Experimental setup}

\textbf{Datasets \& Preprocessing.} We consider the LIDC-IDRI dataset \cite{armato2011lung} for lung nodule classification and the CBIS-DDSM dataset \cite{lee2017curated} for breast mass classification. 

\emph{The LIDC-IDRI dataset} contains thoracic CT images, each associated with bounding boxes indicating the nodule areas, six radiologist-annotated attributes (subtlety, calcification, margin, speculation, lobulation, and texture) and a malignancy score ranging from $1$ to $5$. Before analysis, we preprocess the images by resampling the pixel space and normalizing the intensity. We label those images with malignancy scores of $1$-$3$ as benign ($y=0$) and those with scores of $4$-$5$ as malignant ($y=1$). We split the dataset into training ($n=731$), validation ($n=238$), and test ($n=244$) sets. \emph{The CBIS-DDSM dataset} contains breast mammography images with fine-grained annotations (mass bounding boxes, attributes, and malignancy). We preprocess the images by removing the background and normalizing the intensity. We use the provided binary malignancy label and six annotated attributes (subtlety, shape, circumscription, obscuration, ill-definiteness, and spiculation). We follow the official dataset split, with $691$ masses in the training set and $200$ masses in the test set.

To test the ability of our method to learn expert-aligned features, we add a ``$+$"/``$-$" symbol on the top-left corner of each image as a spuriously correlated feature. This symbol coincides with the malignancy label in the training set, where images with $y=1$ are labeled with ``$+$" and those with $y=0$ are labeled with ``$-$"; but are assigned randomly in the validation and test sets. A well-aligned model should focus on the radiologist-annotated areas rather than the symbol.

\textbf{Evaluation metrics.} To assess the alignment of our model relative to radiologists, we compute the Class Activation Mapping (CAM) \cite{selvaraju2017grad} and report its precision relative to the annotated areas, \emph{i.e.}, $\frac{\text{Area of (CAM} \, \cap \, {\text{Anno)}}}{\text{Area of CAM}}$. We also report the overall classification accuracy.

\textbf{Implementation details.} We use the Adam optimizer and set the learning rate as $0.001$. We parameterize the attributes prediction network $f_{\theta_1}$ with a seven-layer Convolutional Neural Network (CNN), and train it for $100$ epochs with a batch size of $128$ for each iteration. For the classification network $f_{\theta_2}$, we parameterize it with a two-layer Multi-Layer Perceptron (MLP), and train it for $30$ epochs with a batch size of $128$. Please refer to Appx.~\ref{appx.network-arch} for details of the network architectures. For the hyperparameters $\alpha_1$ in (\ref{eq.counter-image-2}) and $\alpha_2$ in (\ref{eq.counter-attribute}), we set them to $\alpha_1=0.01, \alpha_2=0.0005$ for LIDC-IDRI and $\alpha_1=0.07, \alpha_2=0.0005$ for CBIS-DDSM, respectively. For both datasets, we set $\lambda_1=\lambda_2=1$ in (\ref{eq.objective}). {For causal attribution, we calculate the CCCE scores of subsets containing no more than three attributes and select the subset with the highest score.} We adopt the $\mathrm{TorchOpt}$ \cite{ren2022torchopt} package to implement the conjugate gradient estimator. We repeat $3$ different seeds to remove the effect of randomness.

\subsection{Comparison with baselines}

\textbf{Compared baselines.} We compare our method with the following baselines: \textbf{i) \cite{ross2017right}} that achieved interpretability by penalizing the input gradient to be small in object-irrelevant areas; \textbf{ii) ICNN} \cite{zhang2018interpretable} that modified traditional CNN with an interpretable convolution layer to enforce object-centered representations; \textbf{iii) BagNet} \cite{brendel2019approximating} that approximated CNN with white-box bag-of-features models; \textbf{iv) \cite{rieger2020interpretations}} that required the model to produce a classification as well as an explanation (\emph{i.e.}, multi-tasks learning); \textbf{v) \cite{chang2021towards}} that augmented the dataset with various factual and counterfactual images to alleviate the problem of learning spurious features; and \textbf{vi)} the \textbf{Oracle classifier} in which we manually restrict the input features to areas annotated by radiologists.

\vspace{-.5cm}
\begin{table}[htp]
  \caption{Comparison with baseline methods on LIDC-IDRI and CBIS-DDSM datasets. The result of our method is \textbf{boldfaced} and the best result among baseline methods is \underline{underlined}. For the Oracle classifier, the input features are manually restricted to the areas annotated by radiologists.}
  \label{tab.cmp-baselines}
  \centering
  \resizebox{.8\columnwidth}{!}{
  \begin{tabular}{lllll}
    \toprule
    \multirow{2}{*}{Methodology} \qquad\qquad  & \multicolumn{2}{l}{Precision of CAM \quad\quad}  & \multicolumn{2}{l}{Classification accuracy} \\
    \cmidrule(r){2-5}
         & LIDC & DDSM  & LIDC & DDSM \\
    
    \midrule
    \cite{ross2017right} & 0.034 (0.06) & 0.084 (0.11) & \underline{0.656 (0.00)} & 0.559 (0.05) \\
    \cite{zhang2018interpretable} &  0.068 (0.11) & 0.110 (0.13) & 0.381 (0.03) & 0.581 (0.00) \\
    \cite{brendel2019approximating} & 0.048 (0.04) &0.090 (0.04) & 0.358 (0.00) & \underline{0.592 (0.00)} \\
    \cite{rieger2020interpretations} & 0.041 (0.05) & \underline{0.232 (0.17)} & 0.343 (0.00) & 0.586 (0.01)  \\
    \cite{chang2021towards} & \underline{0.074 (0.03)}  & 0.119 (0.07) & 0.503 (0.08) & 0.496 (0.08)  \\
    Oracle classifier & 1.000 (0.00) & 1.000 (0.00) & 0.789 (0.00) & 0.726 (0.01) \\
    \midrule
    Ours & \textbf{0.751 (0.03)}  &\textbf{0.805 (0.06)}   & \textbf{0.722 (0.00)} & \textbf{0.656 (0.00)}   \\
    \bottomrule
  \end{tabular}
  }
\end{table}

Due to the capability of capturing causal features, our method also significantly surpasses baseline models in terms of classification accuracy. This is due to the fact that, unlike the ``$+$"/``$-$" symbol that demonstrates only spurious correlation to the label, features within the annotated areas have a causal relationship with the label, and therefore are transferable to test data.

Additionally, it is worth noting from Tab.~\ref{tab.cmp-baselines} that even the oracle classifier only reaches a classification accuracy of $72\%$ - $79\%$, which seems to contradict some previous results \cite{wu2018joint,wang2022disentangling,liu2023invariance} that claimed an accuracy of more than $99\%$ in lung nodule classification and $90\%$ in breast mass classification. To comprehend, this discrepancy is primarily due to the exclusion of challenging samples (those with a malignancy score of $3$) in \cite{wu2018joint}, and the usage of custom training/test sets split in \cite{wang2022disentangling,liu2023invariance}. 

\subsection{Ablation study}

In this section, we perform an ablation study on the causal alignment loss (Sec.~\ref{sect.loss}) and the hierarchical alignment process (Sec.\ref{sect.hierachy}). The results are shown in Tab.~\ref{tab.ablation-study}. 
\vspace{-.3cm}
\begin{table}[htp]
  \caption{Ablation study on LIDC-IDRI and CBIS-DDSM datasets.}
  \label{tab.ablation-study}
  \centering
  \resizebox{.8\columnwidth}{!}{
  \begin{tabular}{ccllll}
    \toprule
    \multirow{2}{*}{$\mathcal{L}_{\text{align}}$} & \multirow{2}{*}{Hierarchical align\,\,\,}  & \multicolumn{2}{l}{Precision of CAM \quad\quad}  & \multicolumn{2}{l}{Classification accuracy} \\
    \cmidrule(r){3-6}
       &  & LIDC & DDSM  & LIDC & DDSM \\
    
    \midrule
    $\times$ & $\times$ & 0.057 (0.07) & 0.143 (0.20) & 0.535 (0.08)& 0.592 (0.00)   \\
    \checkmark & $\times$ & 0.587 (0.08) & 0.621 (0.03) & 0.701 (0.02) & 0.633 (0.03) \\
    \checkmark & \checkmark & \textbf{0.751 (0.03)} & \textbf{0.805 (0.06)} & \textbf{0.722 (0.00)} & \textbf{0.656 (0.00)}    \\
    \bottomrule
  \end{tabular}
  }
\end{table}

As we can see, both the alignment loss and the hierarchical procedure significantly improve the performance. In detail, the alignment loss accounts for a substantial portion of the improvement, yielding a $50\%$ increase in CAM precision and a $15\%$ boost in classification accuracy. Additionally, the hierarchical training strategy contributes an extra $20\%$ to alignment precision and a $2\%$ increase in classification performance. These results demonstrate the effectiveness of our alignment loss in learning features that coincide with radiologist assessments, as well as the significance of the hierarchical training strategy in mimicking the clinical diagnosis process.

\subsection{Visualization}

To further verify whether our method can learn radiologist-aligned features, we visualize the Class Activation Mapping (CAM) and show the results in Fig.~\ref{fig.visu-grad-cam}. 

\begin{figure}[htp]
    \centering
    \includegraphics[width=.9\textwidth]{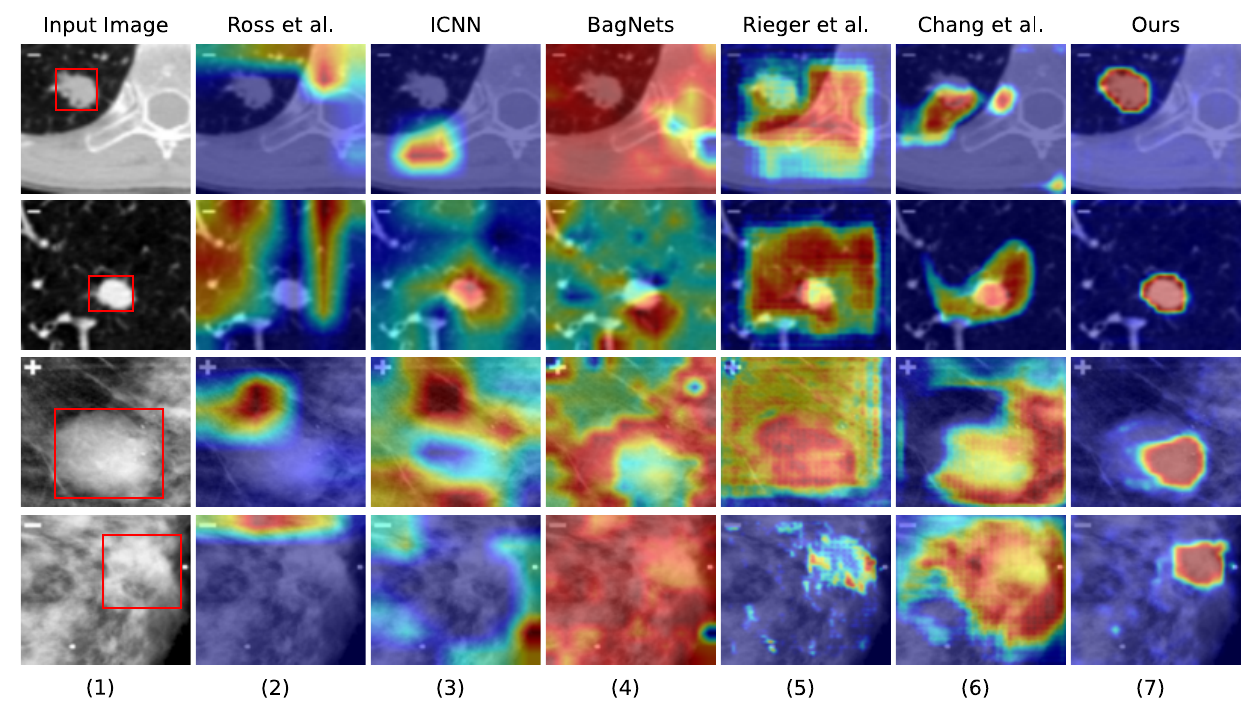}
    \caption{CAM visualization. Each row denotes different cases. The first column is the input images, where nodules and masses are marked by red bounding boxes. The second to seventh columns are CAMs of compared baselines and our method, respectively. See Appx.~\ref{appx.more-visu} for more results.}
    \label{fig.visu-grad-cam}
\end{figure}

As shown, the activation of our method concentrates on the nodule/mass areas, especially on the margins of the nodules/mass, which is a key feature for radiologists to evaluate the malignancy \cite{sandler2023acr,liu2024causal}. In contrast, the activation of baseline methods focuses on lesion-irrelevant areas, such as the shortcut symbol ``$+$"/``$-$" region in the top-left corner for \cite{ross2017right} and \cite{brendel2019approximating}, or the background areas for \cite{zhang2018interpretable}, \cite{rieger2020interpretations}, and \cite{chang2021towards}. This visual analysis corroborates the quantitative results, demonstrating our method's ability to learn features that are well-aligned with the radiologist's diagnostic process.

\section{Conclusion and discussion}

In this paper, we present a causal alignment framework to bridge the gap between the decision-making process of machine learning algorithms and experienced radiologists. By identifying the causal features that influence the model's decision, we can enforce the alignment of these causal areas with those of the radiologists through a causal alignment loss. This further allows us to train a hierarchical decision model that closely mirrors the expert's decision pipeline. The effectiveness of our approach is demonstrated by improved alignment in lung cancer and breast cancer diagnosis.

\textbf{Limitation and Future works.} The optimization of our causal alignment loss can be computationally expensive due to the estimation of the implicit Jacobian matrix. We will investigate efficient linear equation solving techniques \cite{mou2016distributed} to address this challenge. Additionally, we plan to apply our loss to alignment learning in multi-modality models and robotic systems.

\section*{Acknowledgements}

This work was supported by National Science and Technology Major Project (2022ZD0114904) and  NSFC-6247070125; SJTU Trans-med Awards Research Fund 20220102; the Fundamental Research Funds for the Central Universities YG2023QNA47; NSFC-82470106. The authors thank Xuehai Pan for helpful discussions.

\bibliography{iclr25_conference}
\bibliographystyle{iclr25_conference}

\clearpage
\appendix

{\Large Appendix}

\section{Causal alignment theory}
\label{appx.theory}
In this section, we discuss some theoretical aspects of causal alignment. We adopt the following notational convenience. Let $X$ and $Y$ denote the image and the predicted label, respectively. Let $Y_x$ denote the potential outcome of the predicted label under $X$ being $x$. Let $\phi_{\theta,\zeta}$ be the generation process that maps from the original image $x_0$ and the label $y_0$ to the counterfactual image $x$, which relies on the model parameter $\theta$ and random seed $\zeta$. We first require the following assumptions:

\begin{assumption}[Consistency]
    We assume that for each individual, the predicted label $Y$ when $X=x$ is exactly the potential outcome $Y_x$.
    \label{asm.consistency}
\end{assumption}

\begin{assumption}
    We assume (\ref{eq.objective.cf}) has a unique global minimum solution.
    \label{asm.convex}
\end{assumption}

\begin{remark}
    It can be shown that the global minimum of (\ref{eq.objective.cf}) can be attained via gradient descent under smoothness, and Polyak-Lojasiewicz conditions \cite{polyak1964gradient,csiba2017global}. For deep learning optimization, the global minimum can be obtained if $f_\theta$ is over-parameterized \cite{du2019gradient} or has sufficient width \cite{haeffele2017global,kawaguchi2019gradient}.
\end{remark}

Under the above assumptions, we show the probability of causation $\mathrm{P}_\theta(Y_{x} = y|X=x_0,Y=y_0)$ is identifiable. 
\begin{proposition}
\label{prop.identify}
    Assume Asms.~\ref{asm.consistency}, \ref{asm.convex}, then the probability of causation is identifiable with
    \begin{equation*}
         \mathrm{P}_\theta(Y_{x} = y|X=x_0,Y=y_0) = \mathrm{P}_\theta(Y=y|x) \mathrm{P}_\theta(x|x_0,y_0).
    \end{equation*}
\end{proposition}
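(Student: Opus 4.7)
The plan is to read the claimed identity as a chain-rule decomposition in which the counterfactual image $x$ is an explicit random variable (produced by the generator $\phi_{\theta,\zeta}$), to use consistency to swap the potential outcome $Y_x$ for the classifier's observational conditional $Y\mid X=x$, and then to show that once we intervene to set $X=x$ the factual evidence $(x_0,y_0)$ becomes redundant for $Y$. Under this reading, the right-hand side is exactly the pointwise factorization of the joint density of $(Y_x, x)$ given $(x_0,y_0)$.

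First, I would make the counterfactual image a well-defined random quantity. Asm.~\ref{asm.convex} guarantees that $x = \phi_{\theta,\zeta}(x_0,y_0)$ is a measurable function of $(x_0,y_0)$ and the optimization seed $\zeta$, so the conditional law $\mathrm{P}_\theta(x\mid x_0,y_0)$ is unambiguous. The chain rule then gives
\begin{equation*}
\mathrm{P}_\theta(Y_x=y,\, x \mid x_0, y_0) \;=\; \mathrm{P}_\theta(Y_x=y \mid x, x_0, y_0)\,\mathrm{P}_\theta(x \mid x_0, y_0),
\end{equation*}
which already matches the structural shape of the claim.

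Second, I would apply Asm.~\ref{asm.consistency} inside the first factor, replacing $\mathrm{P}_\theta(Y_x = y\mid x,x_0,y_0)$ with $\mathrm{P}_\theta(Y=y\mid X=x, x_0, y_0)$. The crucial remaining step is to drop the conditioning on the factual pair, i.e.\ to argue
\begin{equation*}
\mathrm{P}_\theta(Y=y \mid X=x, X=x_0, Y=y_0) \;=\; \mathrm{P}_\theta(Y=y \mid X=x).
\end{equation*}
For the classifier $f_\theta$ this is justified by noting that, in the model's own causal graph, $X$ is the unique parent of $Y$; there is no back-door path between $X$ and $Y$ once $\theta$ is fixed, so intervening on $X$ severs any dependence on the factual observations. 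Plugging this into the chain-rule identity yields the claimed formula.

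The step I expect to be the main obstacle is the last one: justifying rigorously that intervening on $X$ kills the conditioning on $(x_0,y_0)$. For a deterministic $f_\theta$ this is immediate because $\mathrm{P}_\theta(Y=y\mid X=x) = \mathbf{1}\{f_\theta(x)=y\}$ is a function of $x$ alone, but for a stochastic classifier one must explicitly invoke that $X$ d-separates $Y$ from $(X,Y)_{\text{factual}}$ in the induced graph, which is the exogeneity condition the paper later uses in Sec.~4.3 for the attribute-level attribution. A minor secondary issue is interpretive: the statement as written does not display the $x$ on the left as being sampled from $\mathrm{P}_\theta(x\mid x_0,y_0)$; I would address this in passing by clarifying that the identity is the pointwise factorization of the joint density of the counterfactual image and the potential outcome, which is how the quantity is used in defining $\mathcal{L}_{\text{align}}$.
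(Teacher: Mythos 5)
Your proposal is correct and follows essentially the same route as the paper's proof: factorize through the counterfactual image $x$ (whose conditional law given $(x_0,y_0)$ is a point mass because the generator $\phi_{\theta,\zeta}$ is deterministic under Asm.~\ref{asm.convex} and a fixed seed), drop the conditioning on the factual pair, and invoke consistency to replace $Y_x$ by $Y\mid X=x$. The only cosmetic difference is in how the independence $Y_x \ind (X_0,Y_0)\mid X=x$ is justified --- the paper notes that under the fixed seed $Y_x=\mathrm{sign}(f_\theta(x,u))$ is fully determined by $x$, whereas you appeal to the absence of a back-door path in the model's induced graph --- and both arguments are valid.
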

\begin{proof}
    Denote the counterfactual generator as $\phi_{\theta,\zeta}$. If we fix the model parameter $\theta$ and the random seed $\zeta$, then $\phi_{\theta,\zeta}$ is a deterministic function, which means the conditional probability $\mathrm{P}_\theta(x^\prime|x_0,y_0)=\mathbbm{1}(x^\prime=\phi_{\theta,\zeta}(x_0,y_0))=\mathbbm{1}(x^\prime=x)$ for any $x^\prime$. Therefore, we have
    \begin{align*}
        \mathrm{P}_\theta(Y_{x} = y|X=x_0,Y=y_0) &= \int \mathrm{P}_\theta(Y_x=y|x^\prime,x_0,y_0) \mathrm{P}_\theta(x^\prime|x_0,y_0) dx^\prime \\
        &= \mathrm{P}_\theta(Y_x=y|x,x_0,y_0) \mathrm{P}_\theta(x|x_0,y_0).
    \end{align*}
    Further, under the fixed seed $\zeta$, the potential outcome $Y_x$ is fully determined by the classifier $f_\theta$ and the counterfactual image $x$ via
    \begin{equation*}
        Y_x = \mathrm{sign}(f_\theta(x,u)),
    \end{equation*}
    where $u$ denotes the realization of the randomness $U$ in network prediction under the seed $\zeta$. Therefore, we have $Y_x\ind (X_0,Y_0)|X=x$ and
    \begingroup
    \allowdisplaybreaks
    \begin{align*}
        \mathrm{P}_\theta(Y_{x} = y|X=x_0,Y=y_0) &= \mathrm{P}_\theta(Y_x=y|x,x_0,y_0) \mathrm{P}_\theta(x|x_0,y_0)\\ &= \mathrm{P}_\theta(Y_x=y|x) \mathrm{P}_\theta(x|x_0,y_0) \\
        & = \mathrm{P}_\theta(Y=y|x) \mathrm{P}_\theta(x|x_0,y_0),
    \end{align*}
    \endgroup
    where the last equality is due to Asm.~\ref{asm.consistency}. This shows the identification equation.
\end{proof}

Below, we show $x^*$ in (\ref{eq.objective.cf}) maximizes the probability of causation, which means $supp(x^*-x_0)$ represents the causal factors that determine the model's decisions. As a result, minimizing $\mathcal{L}_{\text{align}}$ encourages the model's causal factors to align with those of the experts.

\begin{proposition}
     Assume Asms.~\ref{asm.consistency}, \ref{asm.convex}, we have
    \begin{equation*}
        x^* = {\arg\max}_{x: d(x,x_0)\leq d_\alpha} \mathrm{P}_\theta(Y_{x} = y^*|X=x_0,Y=y_0)
    \end{equation*}
    for some $d_\alpha$.
    \label{prop.coe}
\end{proposition}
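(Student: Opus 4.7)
The plan is to show that the penalized minimization in (\ref{eq.objective.cf}) is equivalent, via Lagrangian duality, to a constrained maximization of the classifier's predictive probability $\mathrm{P}_\theta(Y = y^* | x)$, and then to identify this latter quantity with the probability of causation using Prop.~\ref{prop.identify}. The starting observation is that the cross-entropy loss is the negative log-likelihood under the softmax output of $f_\theta$, i.e., $\mathcal{L}_{\text{ce}}(f_\theta(x), y^*) = -\log \mathrm{P}_\theta(Y = y^* | x)$, so $x^*$ in (\ref{eq.objective.cf}) is the minimizer of $-\log \mathrm{P}_\theta(Y = y^* | x) + \alpha\, d(x, x_0)$.

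By standard KKT/Lagrangian duality, and using Asm.~\ref{asm.convex} to guarantee a unique minimizer for every $\alpha > 0$, there exists $d_\alpha > 0$ such that this same $x^*$ solves the constrained problem $\argmax_{x:\, d(x,x_0)\le d_\alpha} \mathrm{P}_\theta(Y = y^* | x)$, where I have used monotonicity of $\log$ to flip the sign. This is the key step that translates the penalty form of (\ref{eq.objective.cf}) into a constrained maximization of the classifier's score over a sparsity ball of radius $d_\alpha$ around $x_0$.

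It then remains to apply Prop.~\ref{prop.identify} to write $\mathrm{P}_\theta(Y_x = y^* | X = x_0, Y = y_0) = \mathrm{P}_\theta(Y = y^* | x) \cdot \mathrm{P}_\theta(x | x_0, y_0)$ and conclude that maximizing the probability of causation over the same ball yields the same $x^*$. The main obstacle will be the second factor: from the proof of Prop.~\ref{prop.identify} it collapses to the indicator $\mathbbm{1}(x = \phi_{\theta,\zeta}(x_0, y_0))$, which taken literally is a delta function on the feasible set and would collapse the argmax to a single point before the optimization is even performed. The clean way around this is to view the generator $\phi_{\theta,\zeta}$ as induced by (\ref{eq.objective.cf}) and therefore parameterized by the regularization $\alpha$ (equivalently by $d_\alpha$): as $\alpha$ varies, the output $\phi_{\theta,\zeta}(x_0,y_0)$ sweeps through points in the ball $\{x : d(x,x_0)\le d_\alpha\}$, and along this family $\mathrm{P}_\theta(x | x_0, y_0) = 1$ at every candidate, so the probability of causation is proportional to $\mathrm{P}_\theta(Y = y^* | x)$. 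Combining with the duality step from the previous paragraph, the $x^*$ produced by (\ref{eq.objective.cf}) maximizes the probability of causation over the ball of radius $d_\alpha$, completing the argument. A secondary technical point is verifying the duality without global convexity of $-\log \mathrm{P}_\theta(Y = y^* | \cdot)$; Asm.~\ref{asm.convex} already delivers uniqueness of the penalized minimizer, which is enough to carry through the correspondence between $\alpha$ and its associated $d_\alpha$.
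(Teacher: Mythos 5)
Your proof is correct and follows essentially the same route as the paper's: first convert the penalized problem (\ref{eq.objective.cf}) into a constrained cross-entropy minimization over the ball of radius $d_\alpha = d(x^*,x_0)$ (the paper does this via the elementary exchange argument rather than invoking KKT, and it is precisely the uniqueness in Asm.~\ref{asm.convex} that closes that step, as you note), then identify $-\mathcal{L}_{\text{ce}}$ with $\log \mathrm{P}_\theta(Y=y^*\mid x)$ and apply Prop.~\ref{prop.identify}. Your observation that the factor $\mathrm{P}_\theta(x\mid x_0,y_0)=\mathbbm{1}(x=\phi_{\theta,\zeta}(x_0,y_0))$ would degenerate the argmax is a genuine wrinkle that the paper's own proof glosses over with essentially the same informal reading you adopt, so it does not count against you.
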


\begin{proof}
    \textbf{We first show} that (\ref{eq.objective.cf}) is equivalent to the following constrained optimization problem:
    \begin{equation}
        x^* = {\arg\min}_{x:d(x,x_0)\leq d_\alpha} \mathcal{L}_{\text{ce}}(f_\theta(x),y^*).
        \label{eq.constrained-opt}
    \end{equation}
    To this end, let $d_\alpha:=d(x^*,x_0)$ and let $x^\circ:=\arg\min_{x:d(x,x_0)\leq d_\alpha} \mathcal{L}_{\text{ce}}(f_\theta(x),y^*)$, we show
    \begin{equation}
        \mathcal{L}_{\text{ce}}(f_\theta(x^*),y^*)+ \lambda d(x^*,x_0) = \mathcal{L}_{\text{ce}}(f_\theta(x^\circ),y^*)+ \lambda d(x^\circ,x_0).
        \label{eq.equal}
    \end{equation}
    Since Asm.~\ref{asm.convex} ensures the uniqueness of the minimum of (\ref{eq.objective.cf}), it then follows that $x^*=x^\circ$ and (\ref{eq.constrained-opt}) holds. Now, note that $x^*$ satisfies $d(x^*,x_0)\leq d_\alpha$, which means
    \begin{equation*}
        \mathcal{L}_{\text{ce}}(f_\theta(x^*),y^*) \geq  \mathcal{L}_{\text{ce}}(f_\theta(x^\circ),y^*).
    \end{equation*}
    Since $x^\circ$ satisfies $d(x^\circ,x_0)\leq d_\alpha=d(x^*,x_0)$, we further have
    \begin{equation*}
         \mathcal{L}_{\text{ce}}(f_\theta(x^*),y^*) + \lambda d(x^*,x_0)\geq  \mathcal{L}_{\text{ce}}(f_\theta(x^\circ),y^*) + \lambda d(x^\circ,x_0).
    \end{equation*}
    Since $x^*$ minimizes (\ref{eq.objective.cf}), we also have
    \begin{equation*}
         \mathcal{L}_{\text{ce}}(f_\theta(x^*),y^*) + \lambda d(x^*,x_0)\leq  \mathcal{L}_{\text{ce}}(f_\theta(x^\circ),y^*) + \lambda d(x^\circ,x_0).
    \end{equation*}
    Therefore, we have (\ref{eq.equal}) holds.

    \textbf{We then show} $x^*$ maximize the probability of causation. From (\ref{eq.constrained-opt}), we have
    \begin{equation*}
        x^* = {\arg\max}_{x:d(x,x_0)\leq d_\alpha} \mathrm{P}_\theta(Y=y^*|x)\mathrm{P}_\theta(x|x_0,y_0),
    \end{equation*}
    where the term $\mathrm{P}_\theta(x|x_0,y_0)=\mathbbm{1}(x=\phi_{\theta,\zeta}(x_0,y_0))$ represents the generating process of $x$, and the term $\mathrm{P}_\theta(Y=y^*|x)$ represents maximizing the logarithm likelihood in the cross-entropy loss.

    Then, according to the identification quantity of $\mathrm{P}_\theta(Y_{x} = y^*|X=x_0,Y=y_0)$ shown in Prop.~\ref{prop.identify}, we have
    \begin{equation*}
        x^* = {\arg\max}_{x: d(x,x_0)\leq d_\alpha} \mathrm{P}_\theta(Y_{x} = y^*|X=x_0,Y=y_0).
    \end{equation*}
    This concludes the proof.
\end{proof}

\section{Network architectures}
\label{appx.network-arch}

In this section, we show the network architectures used in lung nodule classification (see Fig.~\ref{fig.network-lidc}) and breast mass classification (see Fig.~\ref{fig.network-ddsm}).

\begin{figure}[htp]
    \centering
    \includegraphics[width=0.9\linewidth]{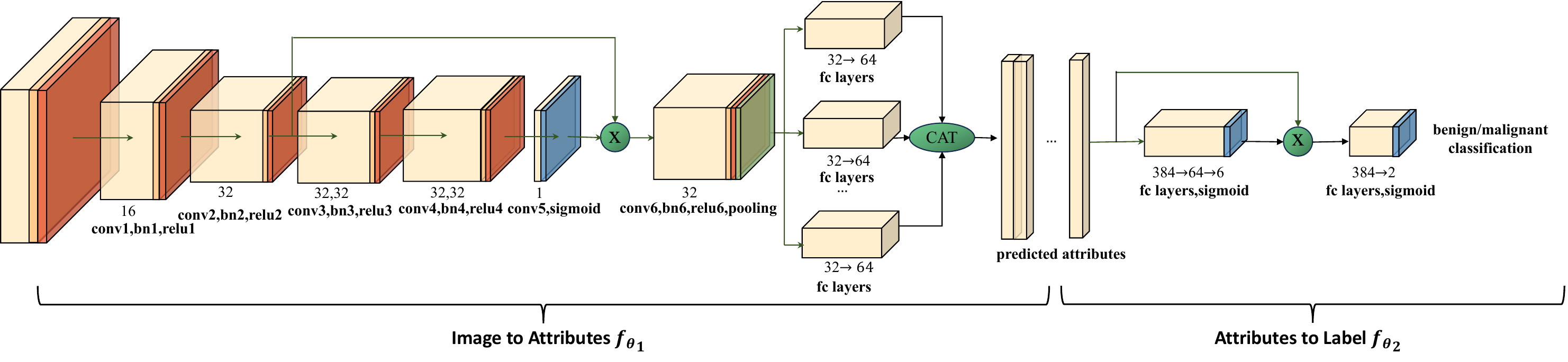}
    \caption{Network architecture used in lung nodule classification.}
    \label{fig.network-lidc}
\end{figure}

\begin{figure}[htp]
    \centering
    \includegraphics[width=0.92\linewidth]{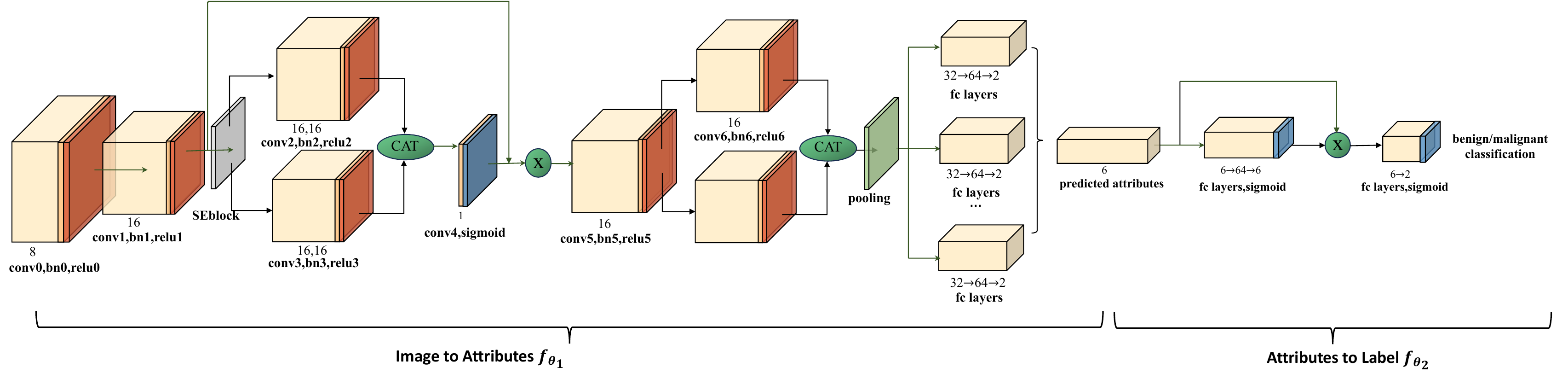}
    \caption{Network architecture used in breast mass classification.}
    \label{fig.network-ddsm}
\end{figure}

\newpage
\section{Extra experimental results}

\subsection{Applicability to different data modalities}
\label{appx.more-modalities}

In this section, we demonstrate the applicability of our method to different data modalities. Specifically, we consider brain MRI data from the \href{https://www.kaggle.com/datasets/awsaf49/brats20-dataset-training-validation}{BraTS dataset}, breast ultrasound data from the \href{https://www.kaggle.com/datasets/aryashah2k/breast-ultrasound-images-dataset}{Aryashah2k dataset}, lung CT data from the LIDC-IDRI dataset \cite{armato2011lung}, and breast mammogram data from the CBIS-DDSM dataset \cite{lee2017curated}. The results are presented in Tab.~\ref{tab.data-modalities}, showing that our method is consistently accurate across various data types.

\begin{table}[H]
  \caption{Performance of our method and baselines on different data modalities. The result of our method is \textbf{boldfaced} and the best result among baselines is \underline{underlined}.}
  \label{tab.data-modalities}
  \centering
  \resizebox{.9\columnwidth}{!}{
  \begin{tabular}{lllllllll}
    \toprule
    \multirow{2}{*}{Methodology}  & \multicolumn{4}{c}{Precision of CAM}  & \multicolumn{4}{c}{Classification Accuracy} \\
    \cmidrule(r){2-9}
         & MRI & Ultra. & CT & Mamm. & MRI & Ultra. & CT & Mamm. \\
        \midrule
    \cite{ross2017right} & 0.036 & \underline{0.197} & 0.034  & 0.084 & \underline{0.730}& 0.679 & \underline{0.656} & 0.559 \\
    \cite{zhang2018interpretable} & \underline{0.168} & 0.159 & 0.068 & 0.110 & 0.698 & \underline{0.764} & 0.381 & 0.581 \\
    \cite{brendel2019approximating} & 0.111 & 0.165 & 0.048 & 0.090 & 0.270  & 0.321  &  0.358 & \underline{0.592} \\
    \cite{rieger2020interpretations} & 0.097 & 0.184 & 0.041 & \underline{0.232} & 0.099 & 0.509 & 0.343  & 0.586 \\
    \cite{chang2021towards} & 0.147 & 0.127 & \underline{0.074} & 0.119 & 0.410 & 0.270 & 0.503 & 0.496 \\
    \midrule
    Ours & \textbf{0.908} & \textbf{0.872} & \textbf{0.751}  &\textbf{0.805} & \textbf{0.835} & \textbf{0.797} &  \textbf{0.722} & \textbf{0.656}   \\
    \bottomrule
  \end{tabular}
  }
\end{table}

\subsection{Results under different shortcut symbols}

We then show the performance of our method under various shortcut symbol settings. Specifically, we consider three cases: the +/- marker, intensity change, and the absence of a symbol. The results are presented in Tab.~\ref{tab.different-symbol}, showing that our method is effective across different shortcut symbols.

\begin{table}[h!]
    \caption{Performance under different shortcut symbols.}
    \centering
    \resizebox{.55\columnwidth}{!}{
    \begin{tabular}{lcccc}
    \toprule
    \multirow{2}{*}{Symbol} & \multicolumn{2}{c}{Precision of CAM} & \multicolumn{2}{c}{Classification Accuracy} \\
       \cmidrule(r){2-5}
        & LIDC & DDSM & LIDC & DDSM \\
    \midrule
    None & 0.783 & 0.882 & 0.707 & 0.652\\
    Intensity & 0.760 & 0.783 & 0.723 & 0.670 \\
    +/- & 0.751 & 0.805 & 0.722 & 0.656 \\
    \bottomrule
    \end{tabular}
    }
    \label{tab.different-symbol}
\end{table}

\newpage
\subsection{Visualization of counterfactual images}
\label{appx.visu-cf}

In this section, we visualize the generated counterfactual images and show the result in Fig. \ref{fig.visu-cf-img}. As we can see, the counterfactual modifications are clearly perceptible and align with specific clinical concepts, thereby validating the effectiveness of our counterfactual generation method.

\begin{figure}[htp]
    \centering
    \includegraphics[width=\textwidth]{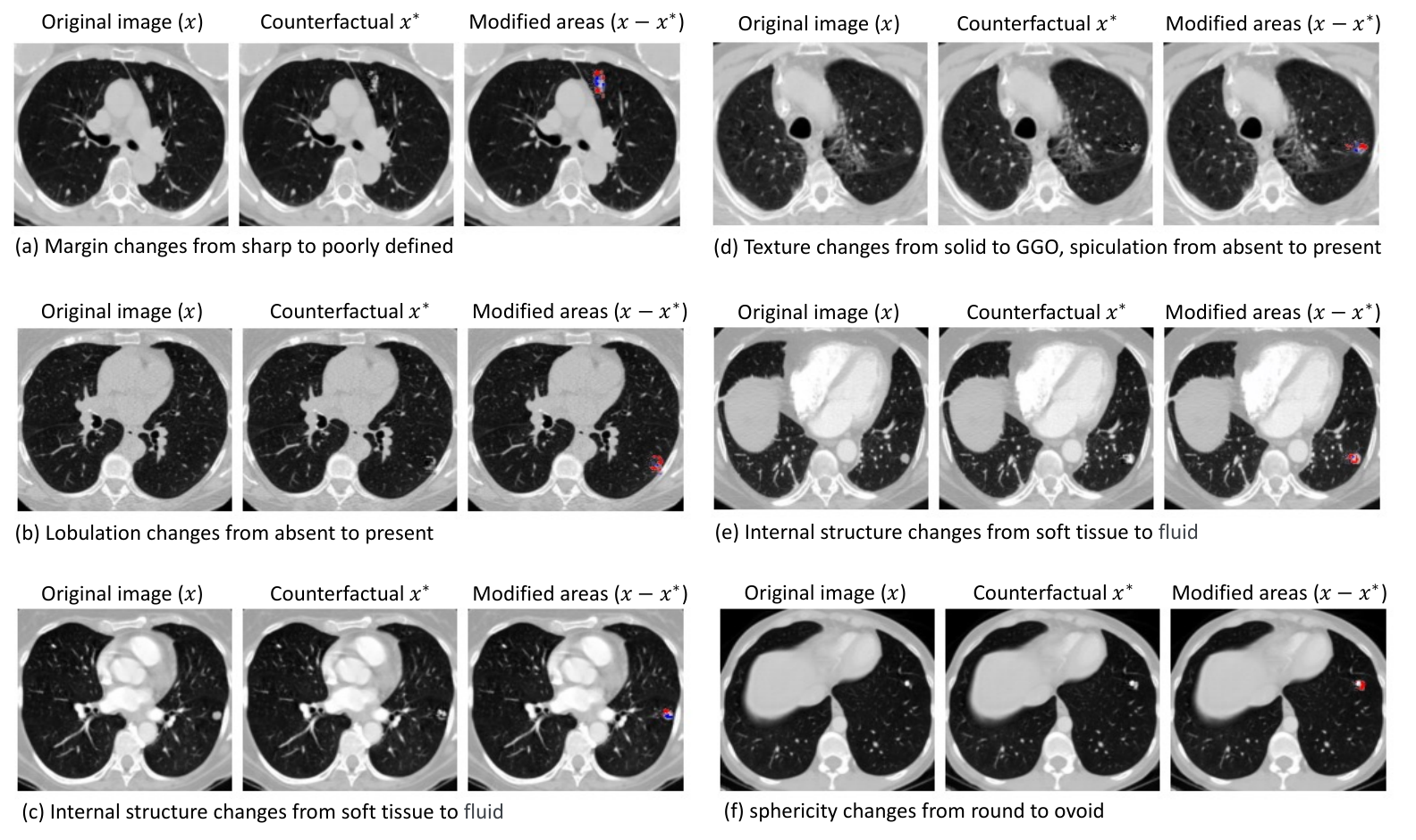}
    \caption{Generated counterfactual images on the LIDC-IDRI dataset. For each sub-figure, the left, middle, and right images denote the original image $x$, the counterfactual image $x^*$, and the modified area $supp(x-x^*)$, respectively. Positive modifications are marked in red and negative ones are marked in blue. We can observe that the counterfactual modifications all correspond to certain clinical attributes of the nodule, for example, in (a), the margin attribute changes from sharp to poorly defined when the label $y$ changes from benign to malignant.}
    \label{fig.visu-cf-img}
\end{figure}

\newpage
\subsection{Visualization of CAMs}
\label{appx.more-visu}
In this section, we provide more visualizations of the CAMs. 

\begin{figure}[htp]
    \centering
    \includegraphics[width=\textwidth]{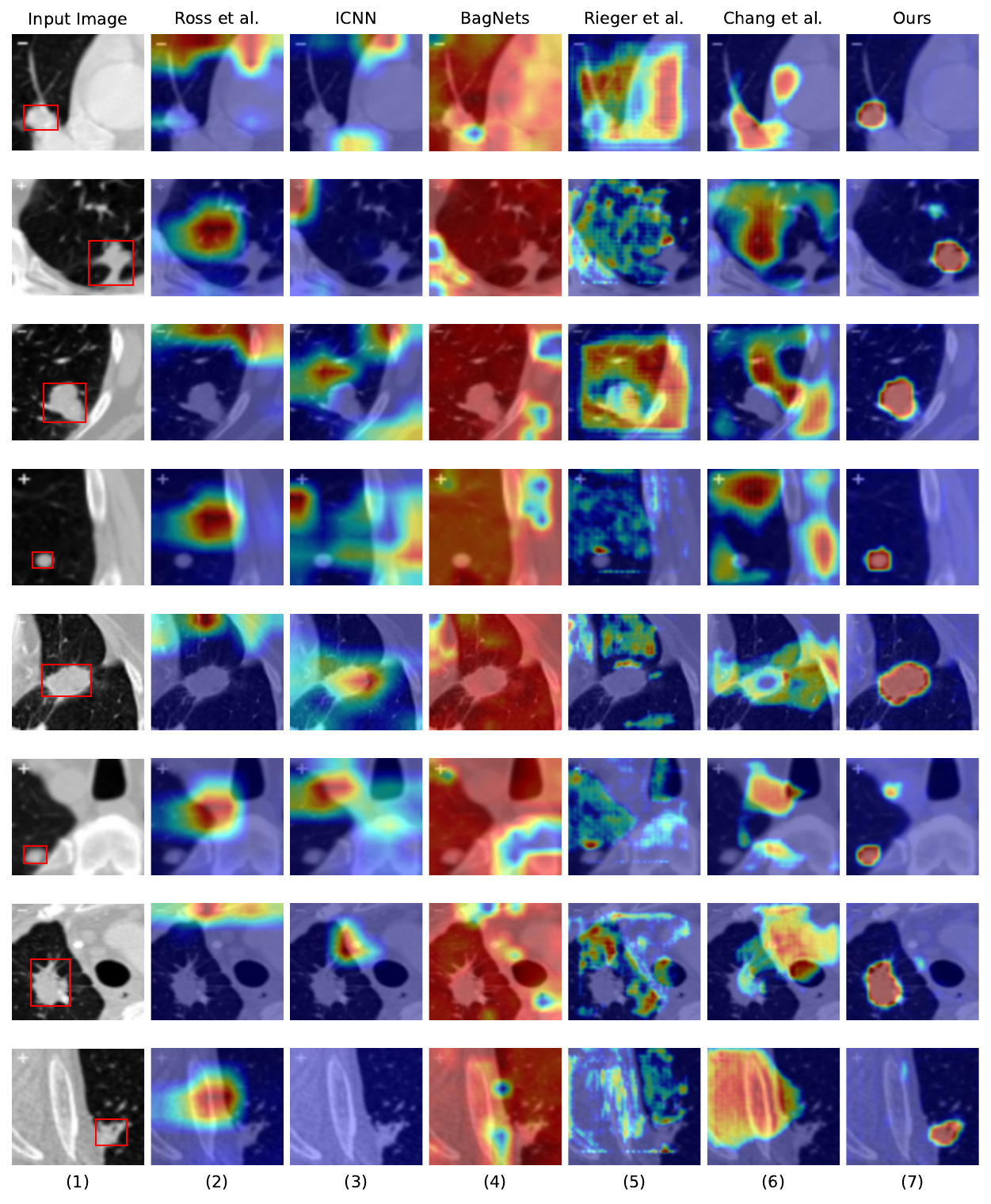}
    \caption{CAM visualization on the LIDC-IDRI dataset.}
\end{figure}

\begin{figure}[t!]
    \centering
    \includegraphics[width=\textwidth]{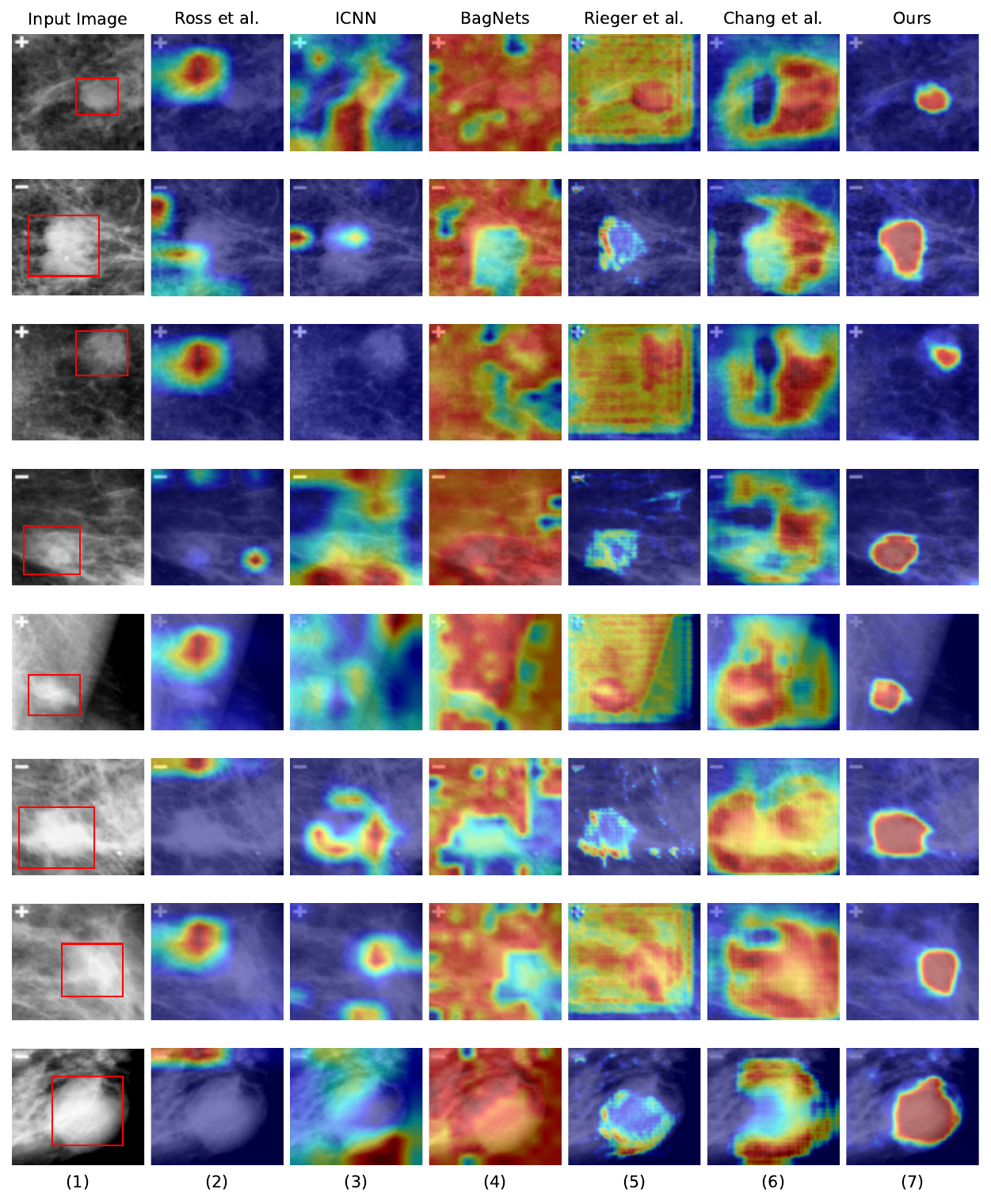}
    \caption{CAM visualization on the CBID-DDSM dataset.}
\end{figure}

\end{document}